\documentclass[conference]{IEEEtran}

\usepackage{amsmath,amssymb,amsfonts}
\usepackage{mathtools}
\usepackage{bm}
\usepackage{array}
\usepackage{textcomp}
\usepackage{stfloats}
\usepackage{url}
\usepackage{graphicx}
\usepackage{booktabs}
\usepackage{enumitem}
\usepackage{cite}
\usepackage{algorithm}
\usepackage{algpseudocode}
\usepackage{amsthm}
\theoremstyle{plain}
\newtheorem{proposition}{Proposition}
\theoremstyle{definition}
\newtheorem{definition}{Definition}
\theoremstyle{remark}
\newtheorem*{remark}{Remark}

\newcommand{\R}{\mathbb{R}}
\newcommand{\Phimat}{\Phi}
\newcommand{\Ej}{E_j}
\newcommand{\tr}{\operatorname{tr}}
\newcommand{\T}{^{\!\top}}
\newcommand{\Wj}{W_j}
\newcommand{\ej}{e_j}

\begin{document}

\title{Trajectory-Aware Node Contributions and\\
the Limits of Static Controllability}

\author{\IEEEauthorblockN{Valentina V. Kuskova}
\IEEEauthorblockA{\textit{Lucy Family Institute for Data \& Society} \\
\textit{University of Notre Dame}\\
Notre Dame, IN, USA \\
vkuskova@nd.edu}
\and
\IEEEauthorblockN{Dmitry Zaytsev}
\IEEEauthorblockA{\textit{Lucy Family Institute for Data \& Society} \\
\textit{University of Notre Dame}\\
Notre Dame, IN, USA \\
zaytsevdi2@gmail.com}
\and
\IEEEauthorblockN{Michael Coppedge}
\IEEEauthorblockA{\textit{Department of Political Science} \\
\textit{University of Notre Dame}\\
Notre Dame, IN, USA \\
mcoppedg@nd.edu}
}

\maketitle

\begin{abstract}
\noindent
A recurring data-mining task in complex, time-evolving networks is to determine
how individual nodes contribute to system behavior. Existing approaches rely on
either static-graph centralities or control-theoretic quantities such as
controllability Gramians, which assume linear, time-invariant dynamics.
Estimated systems, however, are typically nonlinear and time-varying. We define
\emph{emergent contribution}, a finite-horizon measure of a node's dynamical
leverage: the metric-weighted energy of its impulse response accumulated along
the system trajectory. Computed from the Jacobians of any differentiable model,
the measure is estimator-agnostic and reduces \emph{exactly} to average
controllability in the linear, time-invariant limit.

Our contribution is therefore a characterization of \emph{when} the two
measures agree and when they diverge. Using a controlled synthetic family with
known ground-truth contribution, we construct a phase diagram spanning
nonlinearity, regime structure, persistence, and perturbation amplitude.
Emergent contribution and average controllability agree under static or smoothly
drifting dynamics and both track ground truth. Divergence emerges under
persistent regime switching, is strongest under persistent sign reversal, and
disappears when the sign reversal is removed. At extreme perturbation
amplitudes, both measures degrade, identifying the limits of local
linearization.

We then place five estimated real systems, spanning political, economic,
financial, and environmental domains, within this phase space. Their placement
serves as a diagnostic of when emergent contribution provides information beyond
static controllability and therefore justifies its additional computational
cost. On one panel examined in depth, a twenty-seed retraining ensemble reveals
a robust variance--leverage dissociation: nodes whose perturbations propagate
widely despite low within-system variance, and conversely high-variance nodes
whose perturbations do not propagate. This structure is recovered by neither
static centralities nor variance-based summaries.

\end{abstract}

\begin{IEEEkeywords}
Node contribution, dynamical systems, time-evolving networks, controllability,
interpretable modeling, influence, neural vector autoregression, knowledge
extraction.
\end{IEEEkeywords}

\section{Introduction}
\label{sec:intro}

A recurring data-mining task in complex, time-evolving systems is to determine
which variables are dynamically consequential. Given a multivariate time series
and an estimated interaction structure, analysts often wish to identify the
components whose perturbations propagate most strongly through the learned
system. This problem arises across domains including network neuroscience,
systems biology, political economy, economics, and environmental science. In
each case, the objective is not merely to recover an interaction network, but
to mine from a fitted dynamical model which variables are most influential for
the system's future evolution.

One line of work approaches this problem through the topology of a static
network. Descriptive centralities such as degree, betweenness, closeness,
eigenvector centrality, and PageRank measure how a node is positioned within a
fixed pattern of connections. These measures remain widely used throughout
empirical network science. However, they characterize structural position
rather than dynamical influence. A central node is not necessarily one whose
perturbations substantially alter the system's trajectory.

A second line of work treats the network as a dynamical system and evaluates
influence through controllability. Structural
controllability~\cite{liu2011controllability}, the controllability Gramian, and
the average- and modal-controllability measures of Pasqualetti, Zampieri, and
Bullo~\cite{pasqualetti2014controllability} provide principled node-level
summaries of how perturbations propagate through a system. These methods
represent a substantial advance over static centralities because they quantify
dynamical rather than topological influence. Their interpretation, however,
depends on a key assumption: the system is linear and time-invariant,
$\dot{x}=Ax+Bu$.

The systems typically estimated from data are neither linear nor time
invariant. Their local dynamics depend on the state, and their governing
relationships may drift over time. Neural vector autoregressions, nonlinear
state-space models, neural ordinary differential equations, and related
estimators routinely operate in this regime. Yet controllability-based
quantities remain widely used as if the linear time-invariant approximation
were sufficient. This raises a fundamental data-mining question: \emph{when do
the node rankings implied by static controllability remain reliable, and when
do they fail?}

Answering that question requires a node-level measure that remains meaningful
for empirically estimated nonlinear, time-varying systems. We therefore define
\emph{emergent contribution} $\Ej$, a finite-horizon measure of a node's
dynamical leverage in an estimated dynamical system. The measure is constructed
from the state-transition matrix obtained by propagating the system's Jacobians
along its trajectory and requires only that the underlying estimator be
differentiable. It therefore applies to graphical and neural vector
autoregressions, neural ordinary differential equations, and state-space models
alike. The underlying mathematical object itself, finite-horizon
impulse-response energy along a Jacobian-product trajectory, is not new
(Section~\ref{sec:ire}). Our contribution is its formalization as an
interpretable, estimator-agnostic, and gauge-explicit node-level measure for
empirically estimated nonlinear, time-varying systems.

The framework contributes four elements needed to study node-level contribution
in learned dynamical systems. First, emergent contribution reduces exactly to
average controllability in the LTI limit (Section~\ref{sec:reduction}). Second,
it enables a controlled characterization of when that approximation fails
through a synthetic phase diagram (Section~\ref{sec:synthetic}). Third, it
resolves practical issues required for application to learned models, including
gauge choice, companion-form lag handling, horizon selection, and
estimator-agnostic computation. Fourth, we evaluate the framework across five
empirical systems spanning political, economic, financial, and environmental
domains (Section~\ref{sec:application}); the resulting placements within the
synthetic phase space serve as system-level diagnostics for when the measure
provides information beyond static controllability.
Throughout, we distinguish dynamical leverage from broader notions of
importance. Emergent contribution quantifies how strongly perturbations to a
component propagate through an estimated system. It does not imply that a
high-leverage component is inherently more valuable, fundamental, or important
in a normative sense. Such claims require additional domain-specific
arguments. Our focus here is solely on the definition, interpretation, and
behavior of the measure itself.

\section{Related work}
\label{sec:related}

\paragraph{Static network measures} A large literature quantifies node importance through graph-theoretic
centralities such as degree, strength, betweenness, closeness, eigenvector
centrality, and PageRank~\cite{freeman1978centrality, page1999pagerank}. These
measures summarize a node's position within a static network and have proved
highly useful across domains. In network
psychometrics~\cite{borsboom2013network, epskamp2018network}, for example, they
support the view of psychological constructs as emergent properties of systems
of interacting components. Their limitation for the present problem is
fundamental: they characterize topology rather than dynamics. Two systems with
identical adjacency structures but different dynamical laws have identical
centralities, even though a node may strongly influence one system while having
little effect in the other. Related critiques of cross-sectional network
analysis have shown that node-level summaries can conflate dynamical influence
with latent-variable structure~\cite{hallquist2021problems}, motivating a shift
from topological importance to dynamical influence.

\paragraph{Controllability-based measures} A second line of work evaluates node influence through the lens of dynamical
systems and control theory. Structural
controllability~\cite{liu2011controllability} studies whether a system can be
driven to desired states through inputs at selected nodes, while the
controllability Gramian quantifies the energy required to do so. Per-node
summaries such as average controllability and modal
controllability~\cite{pasqualetti2014controllability} provide principled
measures of dynamical influence and have become widely used in network
neuroscience~\cite{gu2015controllability}. These measures move beyond topology
by explicitly accounting for system dynamics. Their interpretation, however,
rests on the assumption of a linear time-invariant system. When dynamics are
nonlinear or vary over time, the fixed matrix $A$ underlying Gramian-based
analysis no longer exists, and the standard controllability framework no longer
applies directly.

\paragraph{Node influence in learned dynamical systems}
The challenge is particularly acute for modern data-mining workflows, where
dynamical systems are increasingly estimated using flexible nonlinear models.
Neural ordinary differential equations~\cite{chen2018neural}, neural
Granger-causal vector autoregressions~\cite{tank2021neural,
bussmann2021neural}, and related architectures routinely produce
state-dependent, time-varying dynamics. Yet the tools used to interpret node
importance in such models often remain either graph-theoretic centralities or
controllability measures applied under a linear approximation. As a result,
there is no widely used operational measure of node-level contribution
that is simultaneously finite-horizon, applicable to empirically estimated
nonlinear systems, and directly computable from a fitted model.

Our approach addresses this gap by defining a trajectory-based measure of
node-level dynamical leverage that depends only on the Jacobians of the
estimated system. The measure reduces exactly to average controllability in the
linear time-invariant limit while remaining applicable to nonlinear and
time-varying dynamics. Because it is formulated entirely in terms of the
estimated Jacobians, it is agnostic to the choice of underlying dynamical
estimator and can be applied wherever differentiable system dynamics are
available.

\paragraph{Relation to nonlinear and operator-theoretic control}
Controllability of nonlinear systems is a mature area of control theory,
typically studied through Lie-algebraic reachability conditions or empirical and
finite-horizon controllability Gramians for nonlinear
systems~\cite{lall2002empirical}. Operator-theoretic approaches, most notably
the Koopman framework, represent nonlinear dynamics through a linear evolution
in a lifted observable space and are often paired with data-driven
approximations such as dynamic mode
decomposition~\cite{williams2015datadriven}. Our objective
is narrower and more operational. Rather than performing a full nonlinear
controllability analysis, we seek a finite-horizon, \emph{per-node} scalar that
can be computed directly from a fitted model's Jacobians and that reduces
exactly to the average-controllability Gramian trace in the linear limit. While
empirical Gramians integrate responses to explicit inputs and Koopman methods
seek a global linearizing representation, emergent contribution propagates local
Jacobians along the realized trajectory and accumulates the resulting
impulse-response energy node by node. The result is a lightweight,
estimator-agnostic summary of dynamical leverage rather than a reachability
certificate.

\paragraph{Relation to sensitivity and attribution methods}
Emergent contribution is related to trajectory-sensitivity
analysis~\cite{khalil2002nonlinear} and to gradient-based attribution methods
such as saliency, integrated gradients~\cite{sundararajan2017axiomatic}, and
neural Granger-causal attribution~\cite{tank2021neural}. Where integrated
gradients accumulate gradients along a path in \emph{input} space to attribute
a single prediction, emergent contribution propagates perturbations through
the full state-transition operator along a path in \emph{time}, accumulating
influence over a declared horizon. Its exact reduction to average
controllability in the LTI limit additionally provides a control-theoretic
interpretation unavailable to purely attribution-based measures.

\paragraph{Influence and spreading on networks}
A separate literature studies a node's capacity to spread influence through a
network, including independent-cascade and linear-threshold models and the
influence-maximization problem they motivate~\cite{kempe2003maximizing}. Like
emergent contribution, these approaches quantify propagation. The difference is
that they operate on a prescribed stochastic diffusion process over a fixed
graph, whereas emergent contribution propagates the \emph{estimated}
deterministic dynamics of a fitted multivariate system and therefore inherits
its nonlinearity and time variation. The two perspectives are complementary:
one studies spreading behavior under an assumed contagion process, while the
other measures dynamical leverage in a learned dynamical system.

We are particularly motivated by settings in which system-level behavior is
viewed as emerging from interactions among constituent components, as in
network psychometrics~\cite{borsboom2013network, epskamp2018network}. Our
contribution is not a new static centrality, but a dynamical measure of
node-level contribution for nonlinear and time-varying systems. In this sense, the
framework extends existing approaches to node importance by providing an
operational notion of contribution in precisely the settings where classical
controllability theory no longer applies directly.

\section{Setup and notation}
\label{sec:setup}

Consider a discrete-time dynamical system on $n$ components. After collapsing any
finite lag structure into companion form, write the one-step map as
\begin{equation}
  x_{t+1} = f_t(x_t) + \varepsilon_t, \qquad x_t \in \R^n .
\end{equation}
The map $f_t$ may be \textbf{nonlinear} and may depend on $t$
(\textbf{time-varying}); $\varepsilon_t$ is mean-zero noise. The local causal
influence of the components on one another is the Jacobian of the map evaluated
along the realized trajectory,
\begin{equation}
  A_t(x) = \left.\frac{\partial f_t}{\partial x}\right|_{x} \in \R^{n\times n},
\end{equation}
whose $(i,j)$ entry is the marginal effect of component $j$ at time $t$ on
component $i$ at time $t+1$, given the current state $x$.

Multi-step propagation is governed by the \textbf{state-transition matrix}, the
ordered product of Jacobians along the trajectory:
\begin{equation}
\Phimat_t^{\tau}(x)
= A_{t+\tau-1}(x_{t+\tau-1}) \cdots A_{t+1}(x_{t+1})\, A_t(x_t),
\end{equation}
with $\Phimat_t^{0}(x) = I$ and the trajectory generated by $x_t = x$ and
$x_{s+1} = f_s(x_s)$. The $j$-th column, $\Phimat_t^{\tau}(x)\,\ej$, is the
\textbf{local impulse response}: the location of a small perturbation to
component $j$ at time $t$ after $\tau$ steps of propagation. Because the product is evaluated along the realized
trajectory, it captures both nonlinearity (through state-dependent Jacobians)
and time variation (through changes in the Jacobians across time). The
state-transition matrix is the fundamental object underlying all subsequent
definitions. The distinction between simulated and observed trajectories becomes important
because the state-transition matrix depends on the sequence of Jacobians along
which it is evaluated.

\paragraph{Choice of trajectory}
The definition above generates the trajectory by iterating the estimated map,
$x_{s+1} = f_s(x_s)$, yielding a \textbf{model-simulated} trajectory. This
choice isolates the intrinsic dynamics implied by the fitted model. In
empirical applications, one may instead evaluate Jacobians along the
\textbf{observed realized} trajectory $\{x_s\}$ from the data, addressing a
different question: how leverage propagated along the path the system actually
followed rather than along the path predicted by the model. The two coincide
under perfect model fit and otherwise differ. We use the simulated trajectory
throughout the formal development and theoretical reductions below, and note
explicitly whenever an empirical diagnostic substitutes the observed
trajectory.

\section{The construct: emergent contribution}
\label{sec:construct}

\begin{definition}[Emergent contribution]
The emergent contribution of component $j$, at time $t$, from state $x$, over
horizon $T$ is
\begin{equation}
  \Ej(t,x;T)
  = \sum_{\tau=0}^{T-1}
    \bigl(\Phimat_t^{\tau}(x)\,\ej\bigr)\T M
    \bigl(\Phimat_t^{\tau}(x)\,\ej\bigr),
  \label{eq:def}
\end{equation}
with $\Phimat_t^{0}(x) = I$ and $M$ a fixed positive-definite metric that
declares what ``system impact'' means. The default is $M = I$ on standardized
components (see Section~\ref{sec:gauge}).
\end{definition}

\paragraph{Interpretation}
$\Ej$ is the total $M$-weighted energy of component $j$'s impulse response,
accumulated over a horizon of length $T$. Components whose perturbations
propagate broadly or persist over time have high emergent contribution, whereas
components whose perturbations remain localized or decay rapidly have low
contribution.

\paragraph{Naming}
Outside the LTI limit, $\Ej$ is not a controllability measure in the
reachability sense of control theory. Rather, it is a finite-horizon measure of
dynamical leverage, or equivalently, impulse-response energy accumulated along a
trajectory. The quantity can be viewed as a trajectory-level generalization of
the average-controllability Gramian trace, with state-dependent Jacobian
products $\Phimat_t^{\tau}(x)$ replacing matrix powers $A^{\tau}$. In the LTI
limit, the two coincide exactly (Section~\ref{sec:reduction}). Outside that
regime, we interpret $\Ej$ as finite-horizon dynamical leverage rather than a
reachability certificate. Accordingly, we reserve the term `average
controllability'' for the LTI reduction and use `dynamical leverage'' or
``impulse-response energy'' elsewhere.

\paragraph{Finite-horizon by design}
$\Ej$ is defined for a finite horizon $T$, and $T$ is a declared parameter rather
than a limit to be taken. This matters because systems with spectral radius near
or above $1$ can exhibit growing impulse-response energy, causing $\Ej$ to
diverge as $T \to \infty$. The finite-horizon definition is therefore a modeling
choice rather than a technical convenience: $\Ej$ measures leverage
\emph{accumulated over a stated horizon}, and any reported value must specify
$T$. Comparisons across nodes, states, or times must hold $T$ fixed.

\paragraph{The self-term ($\tau=0$) is retained}
The $\tau=0$ term contributes $\ej\T M \ej$ to $\Ej$ and should not be removed.
For $M = I$, it equals $1$ for every node, but subtracting a constant from every
node does \emph{not} preserve the normalized weights of
Section~\ref{sec:weights}, because normalization is not shift-invariant.
Accordingly, the self-term is retained; one should not ``exclude the diagonal.''

\subsection{From leverage to weights (an optional aggregation)}
\label{sec:weights}

If one wishes to aggregate components into a single score -  a step requiring a
separate, domain-specific normative argument that we do not make here - the
contribution induces aggregation weights
\begin{equation}
  w_j(t,x) = \frac{\Ej(t,x;T)}{\sum_k E_k(t,x;T)} .
\end{equation}
These weights are \textbf{state-dependent and time-dependent by construction}.
This dependence is intentional: a component's contribution to system dynamics
depends on the regime the system occupies and on how the dynamics have evolved.
Throughout, we use these weights only as a normalized reporting device for
relative leverage, not as a normative aggregation.

\subsection{What $\Ej$ measures, and what it does not}
\label{sec:leverage}

$\Ej$ measures the \textbf{dynamical leverage} of component $j$ in the
\emph{estimated} system: the extent to which perturbations to $j$ propagate
through the system's dynamics. By itself, it does \textbf{not} establish that
$j$ is more important, more valuable, or more fundamental than other
components. High leverage is compatible with at least three distinct
interpretations:
\begin{enumerate}[label=(\alph*), nosep, leftmargin=2.2em]
\item a \textbf{load-bearing} component on which system behavior depends;
\item a \textbf{fragile amplifier} that transmits shocks destructively; or
\item a \textbf{highly endogenous relay} that primarily propagates the
behavior of other components.
\end{enumerate}

The statement that ``component $j$ has dynamical leverage $\Ej$ in the
estimated system'' is descriptive and carries no normative implication. Any
claim that leverage should determine importance - for example, that an
aggregate score ought to weight components by leverage - requires a separate,
domain-specific justification that lies outside the scope of this paper.
Throughout, we interpret $\Ej$ as a measure of dynamical leverage rather than a
normative measure of importance.

\subsection{Impulse-response energy and the contribution}
\label{sec:ire}

Mathematically, $\Ej$ is finite-horizon impulse-response energy accumulated
along a trajectory. The novelty therefore lies not in a new dynamical quantity,
but in its formalization as an operational node-level measure for empirically
estimated systems. The remainder of the paper establishes its exact reduction
to average controllability in the linear time-invariant limit
(Section~\ref{sec:reduction}), addresses the practical issues required for
computation on learned dynamical models
(Sections~\ref{sec:gauge}--\ref{sec:estimator}), and characterizes the
conditions under which it departs from static controllability
(Sections~\ref{sec:synthetic}--\ref{sec:application}).

\subsection{Algorithm}
\label{sec:algorithm}

Algorithm~\ref{alg:ej} gives the operational computation of emergent
contribution. A node-perturbation vector is propagated through the Jacobian
sequence via repeated matrix--vector products, accumulating metric-weighted
energy at each step, including the $\tau=0$ self-term.

\begin{algorithm}[t]
\caption{Computation of emergent contribution}
\label{alg:ej}
\begin{algorithmic}[1]
\Require fitted differentiable model $f$; trajectory $x_t,\dots,x_{t+T-1}$;
horizon $T$; metric $M$; lag order $K$
\Ensure emergent contributions $\{\Ej\}_{j=1}^{n}$ and weights $\{w_j\}$
\If{$K > 1$} construct companion coordinates; state dimension $d \gets nK$
\Else{} $d \gets n$
\EndIf
\For{$s = 0,\dots,T-2$}
  \State $A_s \gets$ Jacobian of $f$ at $x_{t+s}$ \Comment{autodiff; companion if $K>1$}
\EndFor
\For{$j = 1,\dots,n$}
  \State $v \gets e_j^{(0)}$ \Comment{perturb node $j$ in current-state block}
  \State $E_j \gets v^{\!\top} M v$ \Comment{$\tau=0$ self-term, retained}
  \For{$\tau = 1,\dots,T-1$}
    \State $v \gets A_{\tau-1}\, v$ \Comment{propagate: $v = \Phi^{\tau} e_j^{(0)}$}
    \State $E_j \gets E_j + v^{\!\top} M v$
  \EndFor
\EndFor
\State $w_j \gets E_j / \sum_k E_k$ \Comment{normalized leverage weights}
\State \Return $\{E_j\}, \{w_j\}$
\end{algorithmic}
\end{algorithm}

\subsection{Computational complexity}
\label{sec:complexity}

Let $d$ denote the state dimension ($d=n$ for $K=1$ and $d=nK$ in companion
coordinates) and $T$ the horizon. Jacobian extraction requires $T-1$ autodiff
evaluations, with cost determined by the model architecture; the additional
companion-coordinate blocks come from the same per-lag Jacobian and require no
separate evaluations.

Propagation is specific to emergent contribution: each node requires $O(Td^2)$
matrix--vector operations, yielding $O(nTd^2) = O(n^3K^2T)$ overall in
companion coordinates. This is cheaper than explicitly forming the matrix
products $\Phimat^\tau$, which costs $O(Td^3)$. Memory usage is $O(d^2)$ for
the current Jacobian plus $O(d)$ per propagated vector. Once Jacobians are
available, the remaining cost is matrix--vector products and is typically
inexpensive relative to model fitting.

\section{Scale dependence and the gauge choice}
\label{sec:gauge}

Scale dependence is a fundamental property of $\Ej$, not a secondary
implementation detail. Because $\Ej$ is defined as a squared $M$-norm of impulse
responses, both its value and the induced weights depend on the units of the
components, their standardization, and the choice of coordinates. Numerically,
rescaling components by a diagonal matrix $D$ while holding $M=I$ fixed can shift a single node's weight share across most of its possible range, from a small minority to a large majority of the total. Consequently, any
interpretation of $\Ej$ requires an explicit choice of coordinate system and
metric.

\paragraph{The gauge}
Our default choice is to compute $\Ej$ on \textbf{standardized components}
(each component centered and scaled to unit variance) with $M=I$, and to define
``system impact'' in standardized-component units. This is a modeling choice and
must be stated explicitly. It is also the natural default, because it places all
components on a common scale before measuring how perturbations propagate
through the system.

\paragraph{Coordinate transformation law}
Under a linear change of coordinates $x' = D x$, the dynamics transform as
$A' = D A D^{-1}$ and the impulse responses as
$\Phimat' = D\,\Phimat\,D^{-1}$. The contribution remains invariant if and only
if the metric transforms according to
$M' = D^{-\top} M D^{-1}$. Declaring the standardized metric $M=I$ in
standardized coordinates therefore fixes the gauge. Any reported value of
$\Ej$ is understood relative to that declared metric.

\section{Main result: reduction to average controllability in the LTI limit}
\label{sec:reduction}

The key theoretical property of emergent contribution is that it reduces
\emph{exactly} to the average controllability measure of Pasqualetti, Zampieri,
and Bullo~\cite{pasqualetti2014controllability} in the linear time-invariant
(LTI) limit. The measure therefore extends a well-established quantity rather
than introducing an unrelated notion of node influence.

\begin{proposition}[LTI-limit reduction]
\label{prop:reduction}
Suppose $f_t(x) = A x$ for a constant matrix $A$ (the LTI limit), take $M = I$,
and use Definition~\eqref{eq:def} (sum $\tau = 0,\dots,T-1$, $\Phimat^0 = I$).
Then
\begin{equation}
  \Ej(t,x;T) = \mathrm{AC}_j(T) = \tr \Wj(T),
\end{equation}
independent of $t$ and $x$, where
\begin{equation}
  \Wj(T) = \sum_{\tau=0}^{T-1} A^{\tau}\,\ej \ej\T\,(A\T)^{\tau}
\end{equation}
is the node-$j$ average-controllability Gramian.
\end{proposition}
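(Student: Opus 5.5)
The plan is a direct computation that unwinds the definitions in three steps.

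\emph{Step 1: collapse the state-transition matrix to a matrix power.} Under $f_t(x)=Ax$ the Jacobian is $A_t(x)=A$ for every $t$ and every state $x$. The trajectory therefore plays no role, and the ordered product collapses to
\[
\Phimat_t^{\tau}(x)=A^{\tau} \quad \text{for all } \tau\ge 0,
\]
with $\Phimat_t^{0}=I=A^0$ by convention. A short induction on $\tau$ using $\Phimat_t^{\tau+1}=A_{t+\tau}\Phimat_t^{\tau}$ makes this rigorous. The same step already gives independence from $t$ and $x$: nothing in $A^{\tau}$ refers to them.

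\emph{Step 2: substitute into the definition.} Putting this and $M=I$ into Definition~\eqref{eq:def} gives
\[
\Ej(t,x;T)=\sum_{\tau=0}^{T-1}\|A^{\tau}\ej\|_2^2 .
\]

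\emph{Step 3: identify each summand with a trace.} For a vector $v$, cyclicity of the trace (equivalently, the scalar identity $v\T v=\tr(vv\T)$) gives $\|v\|^2=\tr(vv\T)$. With $v=A^{\tau}\ej$, each summand becomes
\[
\tr\bigl(A^{\tau}\ej\ej\T(A\T)^{\tau}\bigr),
\]
using $(A^{\tau}\ej)\T=\ej\T(A^{\tau})\T=\ej\T(A\T)^{\tau}$. Linearity of the trace then moves the finite sum inside, yielding $\tr\Wj(T)$. The equality $\mathrm{AC}_j(T)=\tr\Wj(T)$ is the definition of the node-$j$ finite-horizon average controllability, that is, the trace of the Gramian with input matrix $B=\ej$.

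There is no real obstacle here; the argument is bookkeeping. The one point to handle carefully is the indexing convention. The sum must run over $\tau=0,\dots,T-1$ with the $\tau=0$ term retained, so that it matches the Gramian $\sum_{\tau=0}^{T-1}A^{\tau}BB\T(A\T)^{\tau}$ term by term. If the self-term were dropped, the two quantities would differ by exactly $1$ for every $j$.

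I would also note that finiteness requires no spectral assumption on $A$, since the horizon $T$ is finite. An infinite-horizon version would need $\rho(A)<1$ for the series to converge.
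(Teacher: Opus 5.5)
Your proposal is correct and follows essentially the same route as the paper's proof. You collapse $\Phimat_t^{\tau}(x)$ to $A^{\tau}$ because the Jacobian is constant, substitute $M=I$, rewrite each summand $\|A^{\tau}\ej\|^2$ as $\tr\bigl(A^{\tau}\ej\ej\T(A\T)^{\tau}\bigr)$ via cyclicity of the trace, and then move the finite sum inside the trace; your remarks on the induction and on retaining the $\tau=0$ term agree with the paper's indexing convention.
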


\begin{proof}
In the LTI limit the Jacobian is constant, $A_t(x) = A$, so the state-transition
matrix is a matrix power, $\Phimat_t^{\tau}(x) = A^{\tau}$, with $A^0 = I$. Hence,
taking $M = I$,
\begin{align}
  \Ej(t,x;T)
  &= \sum_{\tau=0}^{T-1} \bigl\| A^{\tau} \ej \bigr\|^2 \notag\\
  &= \sum_{\tau=0}^{T-1} \ej\T (A\T)^{\tau} A^{\tau} \ej \notag\\
  &= \sum_{\tau=0}^{T-1} \tr\!\bigl( A^{\tau} \ej \ej\T (A\T)^{\tau} \bigr)
     && \text{(cyclic property of trace)} \notag\\
  &= \tr \sum_{\tau=0}^{T-1} A^{\tau} \ej \ej\T (A\T)^{\tau} \notag\\
  &= \tr \Wj(T) = \mathrm{AC}_j(T). && \qedhere
\end{align}
\end{proof}

\begin{remark}[Indexing convention]
The reduction requires the sum in Definition~\eqref{eq:def} to run over
$\tau=0,\dots,T-1$ with the $\Phimat^0=I$ term included. Using instead
$\tau=1,\dots,T$ yields
$\tr \Wj(T+1)-(\ej\T M \ej)$, which both adds the $A^T$ term and removes the
$A^0=I$ term. This difference is not merely a uniform shift: because the
weights of Section~\ref{sec:weights} are obtained by normalization, changing the
indexing alters the resulting weights.
\end{remark}

\paragraph{Continuous-time version}
The same reduction holds in continuous time. In the LTI limit,
$\Phimat(\tau)=e^{A\tau}$, giving
\begin{equation}
  \Ej = \int_0^{T} \bigl\| e^{A\tau} \ej \bigr\|^2 \, d\tau
      = \tr \int_0^{T} e^{A\tau} \ej \ej\T e^{A\T \tau} \, d\tau,
\end{equation}
which is precisely the trace of the finite-horizon continuous-time
controllability Gramian. The discrete-time expression is exactly the
finite-horizon discrete Gramian trace; its correspondence with the
continuous-time form arises through the usual discretization limit rather than
through direct equality at finite step size.

\section{When emergent contribution differs from average controllability}
\label{sec:novelty}

The exact reduction of Section~\ref{sec:reduction} establishes that emergent
contribution recovers average controllability whenever the assumptions of the
linear time-invariant model hold. The central question is therefore not whether
$\Ej$ differs from average controllability, but under what conditions the two
begin to diverge.

\paragraph{Nonlinearity}
When $f_t$ is nonlinear, the state-transition matrix
$\Phimat_t^{\tau}(x)$ is formed from \emph{state-dependent} Jacobians rather
than powers of a fixed matrix. Consequently, $\Ej$ becomes state-dependent: a
node's dynamical leverage may vary across operating regimes even when the
underlying network structure remains unchanged.

Nonlinearity alone is not the primary source of divergence from
average controllability. Across the synthetic systems studied in
Section~\ref{sec:phase}, purely nonlinear but otherwise stable dynamics produce
rankings that remain close to those implied by average controllability. The
substantial departures arise instead when nonlinearity interacts with
persistent regime structure, causing the realized Jacobian products to differ
systematically from powers of their average. Under those conditions, the rank
correlation between emergent contribution and average controllability declines
substantially and node orderings are reorganized. The generalization is
therefore non-trivial, but for a specific and identifiable reason rather than
because nonlinearity alone is present.

\paragraph{When $\Ej$ departs from average controllability}
The practical question is when emergent contribution changes the answer rather
than reproducing the familiar controllability ranking. The decomposition
provides the underlying mechanism. Average controllability replaces the realized
sequence of Jacobian products with powers of an average Jacobian,
$\bar A^{\tau}$. The two measures therefore agree when the realized products
remain close to powers of $\bar A$, and diverge when they do not.

Sign reversal provides a sufficient mechanism for departure that is
transparent enough to analyze directly. Suppose a
pathway propagates perturbations positively in one regime and negatively in
another. Averaging the Jacobians causes these opposing effects to cancel, making
the corresponding entry of $\bar A$ appear weak. Emergent contribution, by
contrast, accumulates propagation along the realized trajectory and therefore
retains the contribution associated with each regime. The result is a discrepancy
between the node's realized contribution and its controllability ranking
under the averaged system.

Sign reversal is not the only mechanism that can produce divergence, nor do we
claim that it explains any particular empirical system. Instead,
Section~\ref{sec:phase} characterizes the phenomenon against known ground truth
using a controlled sweep of nonlinearity, regime structure, persistence, and
perturbation amplitude. Across these experiments, departures from average
controllability are negligible for static and smoothly drifting dynamics,
emerge under persistent regime change, and become strongest when persistent
sign reversal interacts with nonlinearity. At sufficiently large perturbation
amplitudes, however, the local-linearization approximation underlying $\Ej$
itself begins to degrade.

The resulting conclusion is deliberately bounded. Emergent contribution is most
informative when the estimated dynamics exhibit persistent regime-dependent
variation while remaining within an operating range for which local
linearization remains faithful.

\paragraph{Time-variation}
When $f_t$ varies over time, the state-transition matrix combines Jacobians
drawn from different dynamical regimes. As a result, $\Ej$ becomes explicitly
time-dependent, tracking how node-level dynamical leverage changes as the
estimated system evolves.

\paragraph{Scope of the novelty}
The novelty of emergent contribution does not lie in introducing nonlinear or
time-varying controllability theory, both of which are well established within
control theory. Rather, it lies in providing an \textbf{operational,
finite-horizon measure of node-level contribution for empirically
estimated nonlinear, time-varying systems}. This is the setting encountered in
many applications of network neuroscience and empirical network science, where
average controllability is typically computed under the linear time-invariant
approximation. Emergent contribution fills this operational gap: it is
computable from estimated Jacobians, reduces exactly to average controllability
in the LTI limit, and remains meaningful when the underlying dynamics are
nonlinear, time-varying, or regime-dependent.

\section{Estimator-agnosticism and uncertainty}
\label{sec:estimator}

\paragraph{Estimator-agnostic}
$\Ej$ requires only the Jacobians $A_t(x)$ evaluated along a trajectory. Any
differentiable dynamical model supplies them, including graphical or neural
VARs, neural ODEs, and state-space models. In the application below, the
Jacobians are obtained from a neural Granger-causal vector-autoregressive model,
but the definition of $\Ej$ does not depend on that choice. Changing the
estimator changes the Jacobians supplied to the measure, not the measure
itself. The contribution is therefore the \emph{measure}: a procedure for
extracting node-level dynamical leverage from a fitted dynamical
model, with the estimator serving as a replaceable source of Jacobians.

\paragraph{The companion operator}
For a model with $K$ lags, the Jacobian used for $\Ej$ (and for any stability
analysis) should be the $(nK \times nK)$ \textbf{companion} Jacobian, whose
eigenvalues are the true multipliers of the $K$-lag system. The leading
lag-$1$ block alone is not the correct multi-step propagation operator because
it omits the contribution of longer lags to system dynamics. Using the
companion form ensures that the state-transition product
$\Phimat_t^{\tau}$ correctly represents propagation when $K > 1$.

\paragraph{What ``perturbing node $j$'' means in companion coordinates.}
In companion form, the lifted state stacks the current and lagged components,
$z_t = (x_t, x_{t-1}, \dots, x_{t-K+1}) \in \R^{nK}$. A perturbation to node
$j$ is applied to the current-state block only. The perturbation vector
$e_j^{(0)} \in \R^{nK}$ equals $1$ in the $j$-th position of the leading
($x_t$) block and $0$ elsewhere, including all lagged copies of $j$.
Perturbing every lagged copy simultaneously corresponds to a different input
and does not represent a perturbation to node $j$ at time $t$. Accordingly,
in companion coordinates, $\Ej$ uses $e_j^{(0)}$ in place of $\ej$
throughout Definition~\eqref{eq:def}. The state-transition operator
$\Phimat_t^{\tau}$ then propagates that current-state perturbation into the
lagged blocks at later times according to the system dynamics, making the node
definition unambiguous for $K > 1$.

\paragraph{Uncertainty propagation}
Because $\Ej$ is a functional of estimated Jacobians, it inherits uncertainty
from the underlying dynamical model. In empirical applications, uncertainty
should therefore be quantified through an explicit resampling or ensemble
procedure rather than reported solely as a point estimate. The choice of
resampling scheme determines which sources of uncertainty are captured.
\textbf{Resampling units} (here, countries as the natural cluster in a country
panel) capture sampling uncertainty over the population of units;
\textbf{block bootstrapping in time} captures uncertainty arising from temporal
dependence; \textbf{resampling residuals} capture innovation uncertainty
conditional on the fitted dynamics; and \textbf{re-running the estimator with
different random seeds} captures optimization stochasticity. Each procedure
induces a distribution over the contributions $\Ej$ and the corresponding
weights $w_j$. For the panel studied here, a natural default is a country
(cluster) bootstrap, optionally combined with a seed ensemble to distinguish
sampling variability from estimation variability. The resulting uncertainty is a
computed quantity whose interpretation depends on the chosen resampling scheme.

\paragraph{Time aggregation}
When a single per-unit score is desired, one must specify where the
state-dependent quantity $\Ej$ is evaluated: at the current state, averaged
along a trajectory, or at a basin equilibrium. Each choice yields a different
summary of dynamical leverage and may be appropriate in different applications.
Accordingly, the aggregation scheme should be stated explicitly and justified in
context.

\section{Synthetic validation}
\label{sec:synthetic}

We validate emergent contribution in two stages: a simple planted system that
illustrates the departure from average controllability, followed by a controlled
phase diagram that characterizes when that departure occurs across a broader
space of dynamics.

\paragraph{A warm-up}
We begin with a six-node saturating system,
$x_{t+1} = \tanh(W x_t)$. Node~$0$ is a hub with strong outgoing connections to
nodes $1,2,3$, whereas node~$4$ contains a saturation-immune self-loop. Near
the origin, $A(x)\approx W$ and $\Ej$ coincides with the
average-controllability Gramian trace to machine precision. As the hub's
pathways saturate, however, its leverage collapses and shifts to node~$4$
(hub share $0.46\to 0.15$, node~$4$ share $0.14\to 0.23$), producing a
reordering that average controllability, which depends only on
$W$, cannot capture.

\paragraph{Companion-operator ablation}
For $K>1$, the companion operator rather than the lag-1 Jacobian block is the
correct propagation operator. On a controlled $\mathrm{VAR}(2)$ system with
non-trivial lag-2 structure, the lag-1 surrogate not only understates
contributions but also \emph{selects the wrong top node}
(rank correlation $0.80$ relative to the correct companion ordering). The
choice therefore affects substantive conclusions rather than merely numerical
magnitudes.

\paragraph{Sign-cancellation: an isolated departure}
We next isolate one mechanism by which $\Ej$ can depart from average
controllability. The system alternates between two regimes, with occasional
transitions between them. One node~$d$ acts as a persistent driver, but its
downstream pathway reverses sign across regimes while all other edges remain
fixed. Within each regime, $d$ is strongly contributive; in the averaged
Jacobian, however, the positive and negative pathway contributions cancel,
causing average controllability to view the node as weak.

Using a ground-truth contribution computed by propagating perturbations through
the true nonlinear dynamics, $\Ej$ recovers the node ordering exactly and ranks
$d$ first, matching the ground truth. Average controllability, by contrast,
misranks $d$ toward the middle because it cannot recover the regime-specific
sign structure. A corresponding control experiment removes the sign reversal
while leaving the remainder of the construction unchanged. In that setting, the
discrepancy disappears entirely. Sign cancellation is therefore a sufficient
condition for separation in this controlled example, though not necessarily the
only one. The broader space of departures is characterized by the phase diagram
that follows.

\subsection{A phase diagram of fidelity to ground truth}
\label{sec:phase}

The experiments above establish particular instances of departure. We now
characterize \emph{when} emergent contribution tracks true contribution, when
average controllability suffices, and when both fail. To do so, we generate
trajectories from a two-regime nonlinear system,
\[
  x_{t+1} = A_{r_t}\,x_t + \alpha\,\tanh(B_{r_t} x_t) + \varepsilon_t,
\]
and systematically vary four factors: \emph{nonlinearity}
($\alpha \in \{\text{low},\text{med},\text{high}\}$); \emph{regime structure}
(static, smooth drift, persistent reroute, persistent sign reversal);
\emph{regime persistence} relative to the evaluation horizon; and
\emph{perturbation amplitude} used for the ground-truth probe.

For each cell, we compute three quantities across 20 random seeds:
$\rho(\Ej,E_{\text{true}})$,
$\rho(\mathrm{AC},E_{\text{true}})$, and
$\rho(\Ej,\mathrm{AC})$, where $E_{\text{true}}$ is obtained by finite-amplitude
perturb-and-propagate through the \emph{true} nonlinear dynamics rather than
through an estimated model. Average controllability is evaluated under two
baselines: a single mean-state linearization (\emph{pooled}) and the average of
the realized Jacobians (\emph{realized}).

The primary outcome is fidelity to ground truth rather than the magnitude of
the $\Ej$--AC discrepancy itself. A departure is only meaningful if emergent
contribution is the measure that more accurately tracks the true contribution.

\begin{table}[!t]
\centering
\caption{Ground-Truth Fidelity Advantage of $\Ej$ over Pooled Average
Controllability, by Regime Structure and Nonlinearity.}
\label{tab:phase}
\begin{tabular}{lccc}
\toprule
Regime structure & low & med & high \\
\midrule
Static            & $0.01$ & $0.05$ & $0.03$ \\
Smooth drift      & $0.00$ & $-0.01$ & $0.01$ \\
Persistent reroute & $0.01$ & $0.04$ & $0.10$ \\
Sign reversal     & $0.26$ & $0.47$ & $0.58$ \\
\bottomrule
\end{tabular}
\end{table}

\begin{figure*}[!t]
\centering
\includegraphics[width=0.92\textwidth]{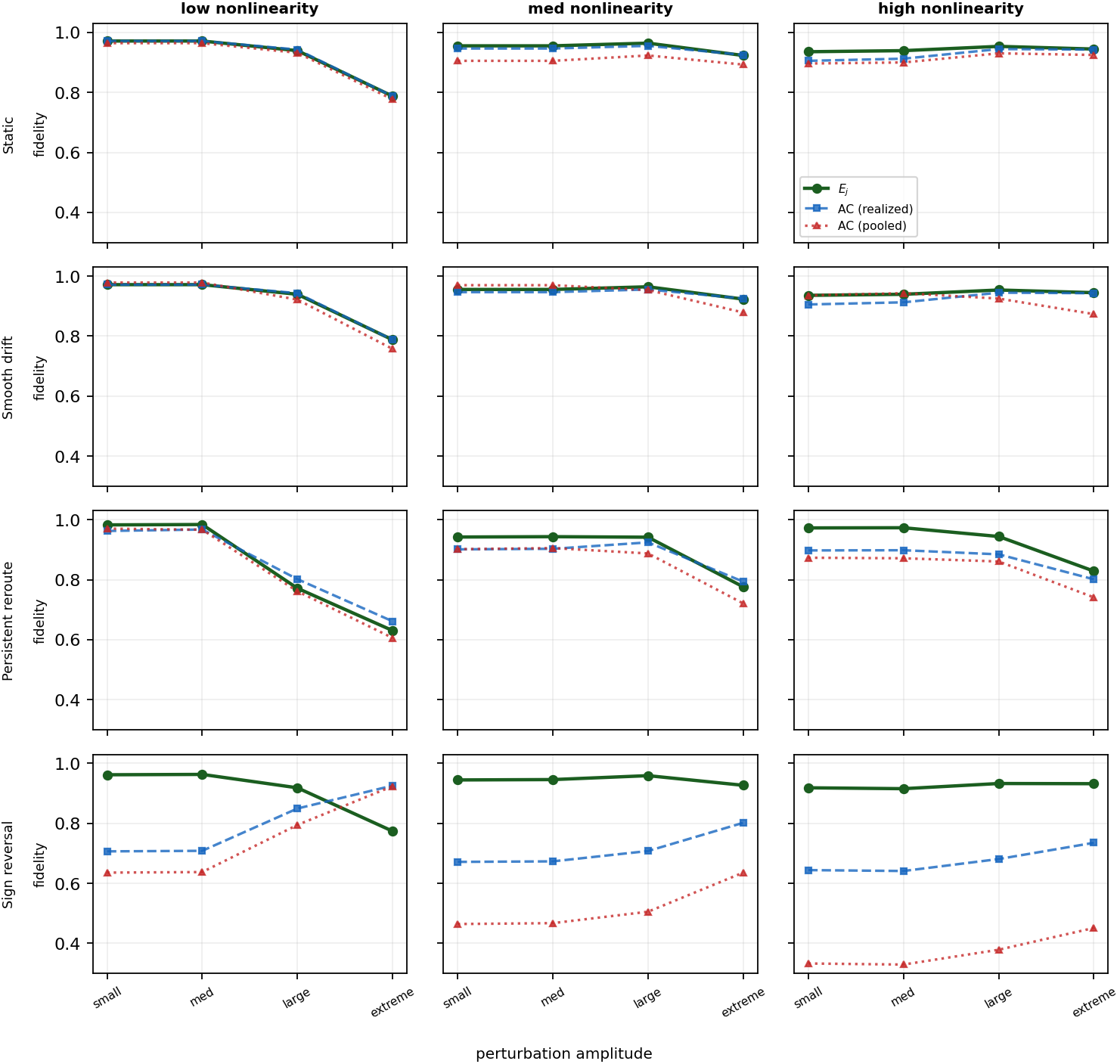}
\caption{\textbf{Phase diagram: when emergent contribution departs from average
controllability.} Each panel reports fidelity to ground-truth contribution
(Spearman rank correlation) as perturbation amplitude varies for emergent
contribution $\Ej$ (solid green), average controllability computed from the
average realized Jacobian (dashed blue), and average controllability from a
single mean-state linearization (dotted red). Rows correspond to regime
structure (top to bottom: static, smooth drift, persistent reroute, persistent
regime-dependent sign reversal); columns correspond to nonlinearity (left to
right: low, medium, high). Under static and smoothly drifting dynamics, all
methods closely track ground truth. Under persistent reroute, $\Ej$ exhibits a
modest fidelity advantage. Under persistent sign reversal, $\Ej$ substantially
outperforms both controllability baselines, while the pooled linearization
degrades sharply. Shaded regions mark the extreme-amplitude boundary, beyond
which the local-linearization approximation underlying $\Ej$ begins to lose
fidelity and all methods converge.}
\label{fig:phase}
\end{figure*}

\paragraph{Four regimes}
Define
$\Delta=\rho(\Ej,E_{\text{true}})-\rho(\mathrm{AC}_{\text{pooled}},E_{\text{true}})$
as the fidelity advantage of emergent contribution over a single mean-state
linearization, with positive values indicating that $\Ej$ more closely tracks
ground truth. Table~\ref{tab:phase} and Figure~\ref{fig:phase} reveal four
distinct operating regimes. The pattern is particularly clear in the
seed-level robustness analysis: across 20 seeds, the advantage of $\Ej$ is
significant in $97\%$ of sign-reversal cells and $56\%$ of persistent-reroute
cells, but only $8$--$17\%$ of static and smooth-drift cells.

\emph{(i) Controllability-sufficient.} Under static or smoothly drifting
dynamics, $\Ej$, both controllability baselines, and ground truth nearly
coincide ($\Delta\approx0$) across all levels of nonlinearity. Smooth drift is
an especially informative negative control, showing that time variation alone
does not induce separation.

\emph{(ii) $\Ej$ separates.} Under persistent sign reversal at moderate and high
nonlinearity, $\Ej$ maintains fidelity near $0.93$, whereas the pooled baseline
falls to approximately $0.35$--$0.48$. The resulting advantage reaches
$\Delta=0.47$ (95\% CI $[0.32,0.63]$) at medium nonlinearity and
$\Delta=0.58$ ($[0.40,0.77]$) at high nonlinearity, with both intervals
excluding zero.

\emph{(iii) Intermediate.} Persistent reroute - a change in the dominant driver
without sign reversal - produces a smaller but consistent advantage
($\Delta\le0.10$), significant in more than half of the corresponding cells.
This indicates that persistent regime change in general can generate
separation, with sign reversal representing the strongest case rather than the
only one.

\emph{(iv) Both fail.} At extreme perturbation amplitudes, the local
linearization underlying $\Ej$ begins to lose fidelity. Under sign reversal,
fidelity declines from approximately $0.94$ to $0.88$ as amplitude increases
and converges toward the controllability baselines. This defines the validity
boundary of the measure: a local-linearization approach cannot faithfully track
sufficiently large-amplitude propagation. Consistent with this interpretation,
under low nonlinearity and extreme amplitude the pooled baseline can outperform
$\Ej$ ($0.92$ versus $0.77$). When exploitable nonlinear structure is weak and
the perturbation lies outside the faithful regime, $\Ej$ has little to gain and
its approximation has more to lose.

Together, these four regimes define a practical decision rule, summarized in
Table~\ref{tab:summary}.
\begin{table}[!t]
\centering
\caption{Main Empirical Findings: When to Use $\Ej$.}
\label{tab:summary}
\begin{tabular}{lll}
\toprule
Regime structure & $\Ej$ vs.\ AC & Practitioner takeaway \\
\midrule
Static / smooth drift & agree & AC suffices \\
Persistent reroute    & $\Ej$ ahead & worth checking \\
Sign reversal         & $\Ej$ wins & use $\Ej$ \\
Extreme amplitude     & both fail & beyond either \\
\bottomrule
\end{tabular}
\end{table}

\paragraph{A hierarchy of approximations}
Figure~\ref{fig:phase} supports a secondary but important conclusion: the
realized-Jacobian baseline (dashed) consistently outperforms the single
mean-state linearization (dotted). Averaging realized Jacobians already
recovers much of the trajectory structure discarded by a single linearization.
The additional gain provided by $\Ej$ comes from preserving the \emph{order} of
propagation rather than merely averaging it. The implication is not that static
controllability is uninformative, but that there exists a hierarchy of
approximations: single linearization, averaged Jacobians, and ordered
trajectory products, with $\Ej$ occupying the most faithful end of that
hierarchy in the regimes identified by the phase diagram.

\paragraph{Exact reduction across horizons}
Across random constant matrices $A$, system sizes
$n \in \{3,4,5,6\}$, and horizons
$T \in \{5,8,12,20\}$, the discrepancy
$\max_j |\Ej-\tr W_j(T)|$ is zero to machine precision, confirming
Proposition~\ref{prop:reduction} numerically.

\section{Placing real systems in the phase diagram}
\label{sec:application}

The synthetic experiments characterize the measure itself; we now ask where
estimated real systems fall within that characterization. Using the same neural
vector-autoregressive model (lag three, companion Jacobians, horizon $T=8$), we
analyze five public multivariate panels spanning distinct domains and measure
how far the $\Ej$ ordering departs from average controllability under the two
baselines of Section~\ref{sec:phase}.

The democracy panel comprises 16 component indicators from the Varieties of
Democracy (V-Dem) project~\cite{vdem15dataset},
capturing electoral, deliberative, participatory, liberal, and egalitarian
dimensions of democracy for 89 countries over 1950--2024. Following a
pre-registered temporal split at year 2000, leverage is estimated using the
pre-cutoff window (4{,}094 country-years after lag construction), with
post-2000 observations held out. The development panel comprises 8 World Bank
World Development Indicators~\cite{worldbank_wdi}; the macro-finance panel
contains 9 macroeconomic series from the Global Macro
Database~\cite{gmd2025}; the realized-volatility panel contains daily realized
volatilities for 8 major global equity indices~\cite{son2023forecasting}; and
the air-quality panel contains 11 pollutant and meteorological variables from
multi-site monitoring in Beijing, 2013--2017~\cite{zhang2017beijing}. These
panels differ substantially in dimensionality, sampling frequency, and
underlying substrate by design.

All panels are modeled using the same neural vector-autoregressive
architecture, optimization procedure, and training schedule. Full model
specification, preprocessing details, train--validation splits, and the frozen
synthetic data-generating process are \textbf{ released in the anonymized reproducibility
bundle}.\footnote{For code and configuration while the paper is under review, please contact the authors. }

\begin{table}[!t]
\centering
\caption{Five Estimated Real Systems Placed in the Phase Diagram.
Realized departures are reported as seed-ensemble means $\pm$ s.d., as the
real-data gaps are small, seed-sensitive quantities (see text).}
\label{tab:domains}
\begin{tabular}{lccl}
\toprule
Domain & $n$ & realized & region \\
\midrule
Macro-finance          & 9  & $0.09 \pm 0.04$ & intermediate \\
Democracy              & 16 & $0.04 \pm 0.03$ & moderate \\
Economic development   & 8  & $0.02 \pm 0.01$ & near-zero \\
Air quality            & 11 & $0.01 \pm 0.01$ & near-zero \\
Realized volatility    & 8  & $0.01 \pm 0.01$ & near-zero \\
\bottomrule
\end{tabular}
\\[2pt]
{\footnotesize\raggedright
Realized departure is $1-\rho(\Ej,\mathrm{AC}_{\text{realized}})$, reported as
the mean $\pm$ s.d. over a retraining ensemble (8 seeds; 3 for air quality, 20
for democracy). The gaps are small and seed-sensitive, so we report
distributions rather than point estimates. The pooled-baseline comparison is
even more variable across seeds and is discussed in the text rather than
tabulated.\par}
\end{table}

\paragraph{Reading the placement against the stronger baseline}
In Table~\ref{tab:domains}, departure is defined as
$1-\rho(\Ej,\mathrm{AC})$. We focus on departure from the
\emph{realized}-Jacobian baseline because Section~\ref{sec:phase} identified it
as the stronger controllability approximation: averaging realized Jacobians
already recovers much of the trajectory structure discarded by a single
linearization.

Viewed through this baseline, the five domains form a gradient consistent with
the synthetic characterization. Economic development ($0.02$), air quality
($0.01$), and realized volatility ($0.01$) exhibit essentially no departure,
indicating that $\Ej$ provides little additional information beyond the realized
Jacobian average. This near-zero placement is robust across the retraining
ensemble: every seed places all three of these panels at or below $0.03$. These
systems therefore occupy the controllability-sufficient region of the phase
diagram, consistent with dynamics that are effectively static or smoothly
varying over the evaluation horizon.

Macro-finance ($0.09$) and democracy ($0.04$) exhibit larger departures,
suggesting richer regime structure. The magnitude of the spread under the
realized baseline is modest ($0.01$--$0.09$), but this is itself informative:
the measure remains close to controllability where the phase diagram predicts it
should and departs only on systems displaying stronger dynamical heterogeneity.
The separation between these two regime-changing panels and the three near-zero
physical panels holds across every seed of the retraining ensemble, rather than
arising from a single fit.
A diagnostic of realized regime structure finds highly persistent but
sign-stable drivers in macro-finance and partial sign reversal in democracy,
corresponding to two of the departure mechanisms identified in
Section~\ref{sec:phase} (persistent reroute and sign reversal). 

The pooled-baseline comparison, departure from a single mean-state
linearization, is the more permissive of the two, but it is markedly less stable
across retrainings on every panel: pooled departures vary widely (for example,
roughly $0.02$--$0.33$ on economic development), so we do not report pooled point
estimates and instead lead on the realized baseline throughout. This instability
is itself consistent with the hierarchy of approximations from
Section~\ref{sec:phase}: a single linearization is a weak summary of trajectory
structure, so its agreement with $\Ej$ is both smaller and more fragile than that
of the averaged-Jacobian baseline, which remains strongly and stably aligned with
$\Ej$ ($\rho\approx0.93$--$0.98$ across seeds on the democracy panel).

Taken together, these results support a practical decision rule. Emergent
contribution provides the greatest value on systems that depart from the
controllability-sufficient corner of the phase diagram, and a system's realized
departure offers an empirical diagnostic of where it lies.

\paragraph{One domain in depth - a robust variance--leverage dissociation}
We examine the V-Dem democracy panel as a worked example using a twenty-seed
retraining ensemble (validation MSE $0.043 \pm 0.0003$). Dynamical leverage does
not merely restate how much a component moves. Across components, $\Ej$ is
essentially uncorrelated with within-country temporal variance (Spearman
$0.03$). By contrast, $\Ej$ correlates substantially with
one-step influence (out-strength $0.80$) and two-step reach ($0.78$).

The dissociation is visible in specific components. \emph{Elected officials}
has the highest within-country variance of all sixteen indicators yet
consistently appears in the bottom leverage tier (typically rank 14 across
seeds). Likewise, \emph{suffrage} combines the second-highest variance with
bottom-tier leverage (rank 16 in the modal seed). Both variables move
substantially, but their movement does not propagate widely through the
estimated system.

These conclusions are robust to estimation noise. Across the twenty retrained
models, the highest- and lowest-leverage components remain sharply identified:
freedom of association and freedom of expression occupy the top two ranks in
every seed (each spanning ranks 1--3), while suffrage and direct vote occupy
the bottom two (ranks 14--16). Individual liberty and elected officials remain
consistently within the top and bottom tiers, respectively, although their
exact ranks vary. Uncertainty is concentrated in the middle of the ordering,
where the remaining ten components reshuffle freely (mean pairwise Spearman
$0.85$). The practical implication is that leverage tiers are reliably
identified, whereas fine distinctions among middle-ranked components are not.

\section{Discussion and Conclusion}
\label{sec:discussion}

We introduced a measure of finite-horizon dynamical leverage for estimated
nonlinear, time-varying systems, showed that it reduces exactly to average
controllability in the linear time-invariant limit, and characterized the
conditions under which the two measures diverge. Across synthetic systems with
known ground truth and across five empirical domains, emergent contribution
captures structure that neither static controllability nor variance-based
summaries recover.

\paragraph{What the measure is and is not} Emergent contribution measures
dynamical leverage, not importance: the claim is descriptive, that a component
has a particular leverage within the estimated system, and any normative
interpretation of that leverage requires a separate domain-specific argument.

\paragraph{Limitations and extensions} The measure is finite-horizon by design. For systems near or beyond marginal
stability, impulse-response energy grows with the evaluation horizon, making
the choice of $T$ an explicit modeling decision that must be stated and held
fixed when comparing results. As a functional of estimated Jacobians, $\Ej$
also inherits estimation uncertainty; the retraining ensemble used for the
democracy panel illustrates one approach to quantifying that uncertainty.
Finally, several natural extensions remain open, including modal-controllability
analogues and definitions of contribution relative to a basin of attraction in
multi-equilibrium systems.

\paragraph{Conclusion} We defined a finite-horizon measure of node-level contribution for estimated
nonlinear, time-varying systems, proved its exact reduction to average
controllability in the linear time-invariant limit, and characterized the
conditions under which the two measures diverge. Departures emerge under persistent regime change, with sign reversal the
strongest case. In those settings, emergent contribution provides a practical,
estimator-agnostic measure that recovers the trusted linear answer when
appropriate and extends it when the dynamics demand more.

\bibliographystyle{IEEEtran}
\bibliography{paper1}

\end{document}